\documentclass[runningheads]{llncs}
\usepackage{times}
\usepackage{helvet}
\usepackage{courier}
\usepackage{url}
\usepackage{graphicx}
\frenchspacing

\usepackage{amssymb}
\usepackage{amsmath}
\usepackage{xspace}
\usepackage{lineno}
\usepackage{tabularx}
\usepackage{mathptmx}
\usepackage{microtype}
\usepackage{verbatim}
\usepackage{xcolor}
\usepackage{enumitem}
\usepackage{lineno}
\usepackage{todonotes}
\usepackage[normalem]{ulem}
\setlist{nosep}
\AtBeginDocument{
\setlength{\abovedisplayskip}{3pt plus 0.5pt minus 0.5pt}
\setlength{\belowdisplayskip}{3pt plus 0.5pt minus 0.5pt}
\setlength{\abovedisplayshortskip}{1pt plus 0.5pt minus 0.5pt}
\setlength{\belowdisplayshortskip}{3pt plus 0.5pt minus 0.5pt}
}




\newcommand{\To}{\Rightarrow}

\newcommand{\PROP}{\ensuremath{\mathrm{PROP}}\xspace}
\newcommand{\LIT}{\ensuremath{\mathrm{Lit}}\xspace}

\newcommand{\FACTS}{\ensuremath{F}}
\newcommand{\LAB}{\ensuremath{\mathrm{Lab}}\xspace}

\newcommand{\non}{\ensuremath{\mathord{\sim}}}

\RequirePackage{ifthen}
\newcommand{\set}[2][\relax]{%
 \ifthenelse{\equal{#1}{auto}}{\left\{#2\right\}}{#1\{#2#1\}}%
}

\addtolength{\topmargin}{-1\baselineskip}
\addtolength{\textheight}{2\baselineskip}

\setcounter{secnumdepth}{2}

\title{Resource-driven Substructural Defeasible Logic}

\author{
 Francesco Olivieri$^1$,
 Guido Governatori$^1$, 
 Matteo Cristani$^2$,
 Nick van Beest$^1$ and
 Silvano Colombo-Tosatto$^1$\\
 $^1$Data61, CSIRO, Australia\\
 $^2$University of Verona, Italy
}
\institute{}

\begin{document}

\maketitle

\begin{abstract}
	
Linear Logic and Defeasible Logic have been adopted to formalise different
features relevant to agents: consumption of resources, and reasoning with
exceptions. We propose a framework to combine sub-structural features,
corresponding to the consumption of resources, with defeasibility aspects,
and we discuss the design choices for the framework.

\end{abstract}

\section{Introduction}\label{section:intro}

Many different models of agency have been proposed over the years. In some of
those understandings, agents are assumed to be rational entities capable to
reason about the environment in which they are situated, and to deliberate
about what actions to take to achieve some particular goals.

Many logic-based approaches have been proposed to account for the rational
behaviour of an agent. For example, in the well known BDI architecture (and
architectures inspired by it), agents first deliberate about the goals to
achieve and, based on such goals, they select the plans to implement from
their plan libraries. Finally, during or after the execution of the plans,
the agents receive feedback from the environment, which can trigger the
so-called \emph{reconsideration}: the activity to determine whether the
intended goals are still achievable with the selected plan and the current
state of execution.

Most of the logic-based approaches take an idealised representation: the
agents have unlimited reasoning power, complete knowledge of the environment
and their capabilities, and unlimited resources. Over the years, a few
approaches (using different logics) have been advanced to overcome some of
these ideal (unrealistic) assumptions. 

In \cite{linear-agent,linear-agent-Mel1,linear-agent-Mel2}, the authors
propose the use of Linear Logic to model the notion of resource utilisation,
and to generate which plans the agent adopts to achieve its goals. In the same
spirit, \cite{jaamas:bio,tplp:goal} address the problem of agents being able
to take decisions from partial, incomplete, and possibly inconsistent
knowledge bases, using (extensions of) Defeasible Logic (a computational and
proof theoretic approach) to non-monotonic reasoning and reasoning with
exceptions. 
While these last two approaches seem very far apart, they are both based on
proof theory (where the key notion is on the idea of (logical) derivation),
and both logics (for different reasons and different techniques) have been
used for modelling business processes
\cite{Kanovich199748,Tanabe1997156,DBLP:journals/apal/EngbergW97,rao2006composition,DBLP:conf/prima/OlivieriGSC13,DBLP:conf/prima/OlivieriCG15,DBLP:conf/edoc/GhooshchiBGOS17}.

Formally, a business process can be understood as a compact representation of
a set of traces, where a trace is a sequence of tasks. A business process is
hence equivalent to a set of plans with possible choices. The idea behind
the work mentioned above is to allow agents to use their deliberation phase to
determine the business processes (instead of the plans) to execute.

In this paper, we discuss some design choices (and offer some results) about
the combination of linear logic (or more in general, \emph{sub-structural
logic}) and a computationally oriented non-monotonic formalism. To the best
of our knowledge, this is the first investigation in this area, but it
combines two approaches that have proved useful for modelling some aspects of
agency.

We expect this work to be foundational for further research in modelling
agents and the way agents create their plans during the deliberation phase,
taking into account the utilisation of resources and possible exceptions (or
partial knowledge of the environment). Hereafter, we focus on introducing the
key logical aspect to be examined in the paper.

Logic is often described as the ``art'' of reasoning, or in other terms, its subject matter is how
to derive conclusions from given premises. Under this prospective we can distinguish rules (or
sequents, or instances of a consequence relation) and inference (or derivation) rules.  A rule
specifies that some consequences follow from some premises, while a derivation rule provides a
recipe to determine the valid steps in a proof or derivation.  A classical example of a derivation
rule is Modus Ponens (i.e., from `$\alpha\rightarrow\beta$' and $\alpha$  to derive $\beta$). A rule
can be understood as a pair 
\[
  \Gamma\vdash \Theta,
\] 
where $\Gamma$ and $\Theta$ are collections of
formulas in an underling language. In Classical Logic, $\Gamma$ and $\Theta$ are sets of formulas
and in Intuitionistic Logic $\Theta$ is a singleton. Thus the rules 
\[
  \alpha,\beta \vdash \gamma,\delta \qquad \text{ and } \qquad
  \beta,\alpha \vdash \delta, \gamma
\] 
are the same rule. Where ``,'' in the antecedent
$\Gamma$ is understood as conjunction and disjunction in the consequent $\Theta$. In substructural
logics (e.g., Lambek Calculus, and the family of Linear Logics), $\Gamma$ and $\Theta$ are assumed
to have an internal structure, and they are considered as multi-sets or sequences. An interpretation
of a rule is how to transform the premises in the conclusion. Thus the rule 
\[
  \alpha,\beta,\alpha\vdash \gamma
\] 
can be taken to mean that we need two instances of $\alpha$ with an instance of
$\beta$ in between to produce an instance of $\gamma$.
Derivation rules, on the other hand, tell us how to combine rules, to obtain new rules. For example,
the derivation rule \[
\frac{\Gamma\vdash\alpha\qquad \Theta\vdash\alpha\rightarrow\beta}
     {\Gamma,\Theta\vdash\alpha,\alpha\rightarrow\beta,\beta}
\] establishes whether we have a derivation of $\alpha$ from $\Gamma$ and a derivation of
$\alpha\rightarrow\beta$ from $\Theta$, then we can combine the $\Gamma$ and $\Theta$, to obtain a
derivation, where we have $\alpha$ followed by $\alpha\rightarrow\beta$ and then $\beta$. If the
formulas denote activities (or tasks) and resources, then the consequent is a sequence of tasks
describing the activities to be done (and the order in which they have to be executed) to produce an
outcome (and also, what resources are needed). Thus, we can use the rules to model transformation in
a business processes, and derivations as the traces of the process (or the ways in which the process
can be executed or the runs of system).

A formalism that properly models processes should feature some key
characteristics, and one of the most important ones is to identify which
resources are \emph{consumed} after a task has finished its execution.
Consider the notorious vending machine scenario, where the dollar resource is
spent to \emph{produce} the can of cola. Trivially, once we get the cola, the
dollar resource is no longer spendable (unless it can be, somehow,
\emph{replenished}). However, the specifications of a process may include
thousands of rules to represent at their best all the various situations that
may occur during the execution of the process itself: situations where the
information at hand may be incomplete and, sometimes, even contradictory, and
rules encoding possible exceptions. This means that we have to adopt a
formalism that is able to represent and reason with exceptions, and partial
information.

Defeasible Logic (DL) \cite{nute2} is a non-monotonic rule based formalism, that
has been used to model exceptions and processes. The starting point being that,
while rules define a relation between premises and conclusion, DL takes the stance that multiple relations are possible, and it focuses
on the ``strength'' of the relationships. Three relationships are identified:
\emph{strict rules} specifying that every time the antecedent holds the
consequent hold; \emph{defeasible rules}, when the antecedent holds then,
normally, the consequent holds; and \emph{defeaters} when the antecedent
holds the opposite of the consequent might not hold. An example of rules with
a baseline condition and exception is the scenario the outcome of inserting a
dollar coin in a vending machine is that we get a can of cola, unless the
machine is out of order, or the machine is switched off. Thus, we can
represent this scenario with the rules\footnote{$r_i$ is the name of rule $i$, symbol $\To$ denotes rules meant to derive \emph{defeasible} conclusions, i.e. conclusions which may be defeated by contrary evidence. As it will be clear in Section~\ref{sec:LDL}, defeasible rules, defeaters, and the superiority relation represent the non-monotonic aspects of our framework.}:
\begin{gather*}
r_1: 1\$ \Rightarrow\mathit{cola}\\
r_2: \mathit{OutOfOrder} \Rightarrow \neg\mathit{cola}\\
r_3: \mathit{Off} \Rightarrow \neg\mathit{cola}.
\end{gather*}
Based on the discussion so far, the motivation of the paper is twofold. From
a technical point of view we want to combine, from a logic perspective, the
mechanisms of defeasibility with mechanisms from substructural logic (to
capture the order of resources, and the consumption of resources). It is
clear that the resulting combination of logical machinery could provide a
much better formalism for the representation of processes. Accordingly, we
will use the point of view of business process modelling to illustrate the
technical features we are going to define in the logic (or, to be more
specific, for possible variants of substructural defeasible logics).

The remainder of this paper is structured as follows.
Section~\ref{sec:desired_properties} introduces the reader to the features we
want our logics to be equipped with. Subsequently, we provide the
formalisation of the logics in Section~\ref{sec:LDL}, and finally,
Section~\ref{sec:Conclusions} concludes our work.

\section{Desired properties} 
\label{sec:desired_properties}

We dedicate this section to detailing which new features our logics need to
implement and, for each of them, to justify their importance with respect to
real life problems.

\paragraph{Ordered list of antecedents}\mbox{}\\
Given the rule `$r: A, B \To C$', the order in which we derive $A$ and $B$ is typically irrelevant for the derivation
of $C$. As such, $r$ may indistinctively assume the form `$B, A \To C$'.
Consider a login procedure which requires a username and password.
Whether we insert one credential before the other does not affect a
successful login.

Nonetheless, sometimes it is meaningful to consider an \emph{ordered}
sequence of atoms in the head of a rule, instead of an \emph{unordered} set
of antecedents. 
Suppose we have the two activities `\emph{Check Creditworthiness}' and
`\emph{Approve Loan}'. Neither of them depends on the other. However,
performing one activity before the other may affect the final result: if we
approve the loan before creditworthiness has been checked and approved, then
a loan may potentially be provided to someone who is not able to repay.

This allows us to capture the fact that some resources may be
\emph{independent} of each other from the derivational viewpoint (one does
not \emph{derive} the other), but are \emph{dependent} from a temporal perspective
(one must be \emph{obtained} before the other).
Naturally, in the same set/list of antecedents, combinations of unordered and
ordered sequences of literals is possible. For
instance, \[r: A; B; (C, D); E\To F\] represents a situation where, in order
to obtain $F$, we need to first obtain $A$, then $B$, then either $C$ or $D$
in any order, and lastly, only after both $C$ and $D$ are obtained, we need
to obtain $E$. The notation `;' is used as a separator between
elements in an ordered sequence, while `,' separates unordered sequences.

\paragraph{Multi-occurrence/repetitions of literals}\mbox{}\\
From these ideas, it follows that some literals may appear
in multiple instances, and that two rules such as 
\[
r: A; A; B \To C \qquad \text{ and } \qquad 
s: A; B; A \To C
\] 
are semantically different. For instance, rule $r$ may describe a scenario where the order of a product may require two deposit payments followed by a full payment prior to delivery.
%
%
%
%
Regarding $s$, consider that $A$ is now `Add a tablespoon of ice sugar' and $B$ is `Stir for 1 minute`. A perfect frosting requires many repetitions of $A$ after $B$ after $A$, for a specific number of repetitions.

\paragraph{Resources consumption}\mbox{}\\
Assume we have two rules, 
\[
r: A, B \To D \qquad \text{ and }\qquad
s: A, C \To E.
\] 
If we are able to derive $A$, $B$ and $C$, then $D$
and $E$ are subsequently obtained. Deducing both $D$ and $E$ is a typical problem of
\emph{resource consumption}. 

Given the financial state of a customer (i.e., their pay cheque and their
monthly spending), a finance approval is sent to the customer for the
requested loan. However, that finance approval can only be used once, given
the financial situation of that customer. That is, they cannot obtain another
loan with the same finance approval. If the customer wants to apply for
another loan, they are required to obtain a new finance approval first.

This example indicates that some literals represent
resources that are \emph{consumed} during the derivation process: if they appear in the antecedent of a rule, and such a rule produces
its conclusion, then the other rules with the same literals in their
antecedent can no longer fire (unless there are multiple occurrences).

Conversely, some resources are \emph{not} consumed once used. For
instance, a policy at a bank may dictate that a customer has to be below
65 years old to be eligible for a mortgage. A similar requirement
may hold for a car loan. However, a customer may apply for both a mortgage
and a car loan, as neither of these applications invalidate the fact that the
customer is younger than 65 years old. That is, the information regarding the
customers' age is not consumed when used.


The discussion of when a resource has to be
considered consumable/non-consumable is outside the scope of this paper. It is a duty of the knowledge engineer to decide whether to tag a resource as consumable, or non-consumable. For the remainder of this paper, we assume all literals
to be consumable. The treatment/derivation of non-consumable literals is the same as in Standard Defeasible Logic (SDL), and thus something well known in the literature of SDL. 

\paragraph{Concurrent production}\mbox{}\\
Symmetrically, we consider two distinct rules having the same conclusion:
\[
r: A \To C \qquad \text{ and }\qquad
s: B \To C.
\] 
It now seems reasonable that, if both $A$
and $B$ are derived, then we conclude two instances of $C$ (whereas in
classical logics we only know that $C$ holds true). For example, consider a
family where it is tradition to have pizza on Friday evening. Last Friday,
the parents were unable to communicate with each other during the day, and one
baked the pizza while the other bought take-away on the way home.


However, there exists consistent cases where multiple rules for the same literal produce only
\emph{one} instance of the literal (even if they all fire). For example, both a digital or handwritten signature would provide permission to proceed with a request. The same
request does not require permission twice: either it is permitted, or it is not.

\paragraph{Resource consumption: A team defeater perspective}\mbox{}\\
Sceptical logics provide a means to decide which conclusion to draw in case of
contradicting information. Typically, a superiority relation is given among
rules for contrary conclusions: it is possible to derive a conclusion only if
there exists a \emph{single} rule stronger than \emph{all} the
rules for the opposite literal.

Defeasible Logic handles conflicts differently, and the idea here is that of
\emph{team defeater}. We do not look at whether there is a single rule
prevailing over all the other rules, but rather whether there exists a
\emph{team} of rules which can jointly defeat the rules for the contrary
conclusion. That is, suppose rules $r'$, $r''$ and $r'''$ all conclude $P$,
whilst $s'$ and $s''$ are for $\neg P$. If $r'>s'$ and $r''>s''$, then the
team defeater made of $\set{r',r''}$ is sufficient to prove $P$.

The focus remains on resource consumption and production. As such, the
questions we need to answer are, again, which resources are consumed, and
how many instances of the conclusion are derived.
We start by distinguishing the two scenarios where: (a) neither of the teams
prevail, (b) one team wins.
Consider 
\[
r': A \To P,\quad 
r'': B \To P,\quad 
r''': C \To P,\quad 
s': D \To \neg P,\quad
s'': E \To \neg P.
\] 
In case (a), e.g., when no superiority
is given, we cannot conclude for either conclusion. Hence, the question is
``Will any of the resources be consumed?''.
In case (b), we assume $r'>s'$ and $r''>s''$, and we conclude that $P$. How
many instances of $P$ are produced? One solution is to produce three
instances of $P$ and, accordingly, $A$, $B$ and $C$ are all consumed. We can
instead consistently assume that we produce $P$ twice, through the two
winning rules $r''$ and $r'''$ only, but not via $r'$; we thus consume $B$
and $C$, but \emph{not} $A$. 

Lastly, on the perspective of the \emph{defeated} rules another relevant
question is: Are $D$ and $E$ ever consumed? As clear, there is no unique
answer. There are consistent scenarios where the literals in the
\emph{defeated} rules are consumed, and other cases where they are
not. In Section~\ref{sec:LDL}, we provide different solutions to cover the
various cases.

Consider the process of writing a scientific publication for a conference. If
the paper is accepted, the \emph{manuscript} resource is consumed, since it
cannot be submitted again. On the contrary, if the paper is rejected, the
\emph{manuscript} resource is \emph{not} consumed since it can be submitted
again to other venues.

\paragraph{Multiple conclusions and resource preservation}\mbox{}\\
Consider internet shopping. As soon as we pay for our online order, the bank
account balance decreases, the seller's account increases. Both the seller
and the web site have the shipping address and, possibly, the credit card
number.

The conclusion of a rule is usually a single literal. The above example
suggests that a single rule may produce more than one conclusion, which
cannot be represented by multiple rules with the same set of antecedents. For example, consider the rules 
\[
r: A, B \To C \qquad\text{ and }\qquad 
s: A, B \To D.
\] 
In a
propositional calculus, once the system derives $A$ and $B$, by Modus Ponens,
we obtain both $C$ and $D$. However, when we consider resource consumption,
then it is clear that only one rule can produce its conclusion, whilst the
other cannot.
We tackle this problem by allowing rules to have multiple conclusions. Thus,
$r$ and $s$ can be merged into the single rule 
\[r': A, B \To C, D.\]

Similar to our discussion 
on the ordering of antecedents, we may have any
combination of ordered/unordered sequences of conclusions. In the previous
example, only after we have provided the credit card credentials, our bank
account decreases, whilst we can provide the shipping address before the
credit card credentials, or the other way around.

The notion of multiple conclusions, along with the discussion 
on team defeaters, leads to another problem. Consider the two
rules
\[ 
r: A \To B;C;D \qquad \text{ and }\qquad
s: E \To \neg C,
\] 
where no superiority is given. Do we
conclude that $B$ or $D$? Moreover, what happens if now we have `$r: A \To B,C,D$' and we establish that
$s$ is stronger then $r$? Do we conclude that $B$ and $D$
(meaning that only the derivation of $C$ has been blocked by $s$), or will
the production of $B$ and $D$ be affected also?

\paragraph{Loops}\mbox{}\\
The importance of being able to properly handle loops is evident: loops play a
fundamental role in many real life applications, from business processes to
manufacturing. Back to the login procedure, if one of the credentials is
wrong, the process loops back to a previous state, for instance, by asking
the user to re-enter both credentials.

Naturally, a system is able to properly handle loops when it can
handle/recognise the so-called \emph{exit conditions}, to prevent infinite
repetition of the same set of events. For example, after three wrong login
attempts, the login procedure may prevent the user from further attempts and
require them to undergo a \emph{retrieve credential} procedure.


\section{Language and logical formalisation of RSDL}\label{sec:LDL}

Before introducing the notation used throughout the remainder of this paper, we will first justify a
number of implementation choices.
In Section~
\ref{sec:desired_properties}, we stated that there are two \emph{types} of
literals: consumable against non-consumable. 
In addition, we
introduced the notion of ordered sequences of
literals in the antecedents and conclusions, we stated that any
combination of ordered or unordered sequences of literals is (theoretically)
possible.

The logics presented here shall \emph{only} consider: (i) \emph{consumable}
literals, and (ii) either multi-sets or ordered sequences of literals (but
not their combination). Our motivation is that adding conditions to the proof
tags (labels that describe how a literal can be derived) in order to deal
with non-consumable literals and alternating ordered/unordered sequences is a
trivial and pedant task. Their formalisation is exactly the same of that in
Standard DL (SDL), and thus the process would not add any value (it will be
done for the sake of completeness in future work).

Our logics deal with two types of derivations: \emph{strict} and
\emph{defeasible}. Their underlying meaning is the same as those in SDL.
\emph{Strict rules} derive indisputable conclusions, i.e., conclusions that
which are always true. Thus, if two strict rules have opposite conclusions,
then the resulting logic is inconsistent. On the contrary, defeasible rules
are to derive pieces of information that can be defeated by contrary
evidence, like `Birds typically fly' since we know that `Penguins are birds
that do not fly'. Finally, defeaters are special type of rules whose only
purpose is to block contrary evidence. They cannot be used to directly derive
conclusions, but only to prevent other rules to conclude.

We have now passed through the conceptual basis of our logics and can move
forward to introduce the language of Resource-driven Substructural Defeasible Logic (RSDL). 
We will use greek letters to denote propositional atoms, while roman letters
$r$, $s$, $t$ are reserved to denote rule labels. In addition, $l$ and $c$
are reserved to denote \uline{l}ines and \uline{c}olumns in a proof, while
other roman letters are typically used as subscripts to denote the
cardinality of sets/sequences; capital $F$ denotes the set of facts,
capital $R$ the set of rules, $A(r)$ the list of antecedents, and $C(r)$
the set/list of conclusions of rule $r$.

$\PROP$ is the set of propositional atoms, the set $\LIT= \PROP \cup
\set{\neg \varphi | \varphi \in \PROP}$ denotes the set of literals. The
\emph{complement} of a literal $\varphi$ is denoted by $\non \psi$; if $\psi$
is a positive literal $\varphi$, then $\neg \psi$ is $\neg \varphi$, and if
$\psi$ is a negative literal $\neg\varphi$, then $\non\psi$ is $\varphi$.

We adopt the standard DL definitions of \emph{strict rules}, \emph{defeasible
rules}, and \emph{defeaters}.

\begin{definition}

Let $\LAB$ be a set of arbitrary labels. Every rule is of the type $r: A(r) \hookrightarrow C(r)$, where

	\begin{enumerate}
		\item $r\in \LAB$ is a unique name.
		\item\label{Alist} $A(r)=\alpha_1,\dots,\alpha_n$, the \emph{antecedent}, or \emph{body}, of the rule is a list of literals.
		
		
		\item An \emph{arrow} $\hookrightarrow\in \set{\to, \To,
	\leadsto}$ denoting a strict rule, a defeasible rule, and a
	defeater, respectively;

		\item $C(r)$ is the \emph{consequent}, or \emph{head}, of the rule. For the head, we consider three options:
	   \begin{enumerate}
	     \item The head is a single literal $\varphi$;
	     \item The head is a list $\varphi_1,\dots,\varphi_m$ (to be understood as a multi-set);
	     \item The head is a list $\varphi_1;\dots;\varphi_m$ (to be understood as a ordered list).
	   \end{enumerate}
	\end{enumerate}
\end{definition}

With abuse of notation, we will often refer to
$\varphi_1,\dots,\varphi_m$ as a set, and overload standard set theoretic
notation.
Given a set of rules $R$, and a rule $r: A(r) \hookrightarrow C(r)$ we use
the following abbreviations for specific subsets of rules: (i) $R_s$ is the
subset of strict rules, (ii) $R_{sd}$ is the set of strict and defeasible
rules, (iii) $R[\varphi;i]$ is the set of rules where $\varphi$ appears at
index $i$ in the consequent where the consequent is a list, (iv)
$R[\varphi,i]$ when the consequent is a set containing $\varphi$.

\begin{definition}
	A \emph{resource-driven substructural defeasible (rsd) theory} is 
	a tuple $(F, R, \succ)$ where:
(i) $F\subseteq \LIT$ are pieces of information denoting the (consumable)
resources available at the beginning of the computation. This differs
strikingly from SDL, where i) they denote \emph{always-true} statements; (ii) $R$
is the set of rules; (iii) $\succ$, the superiority relation, is a binary
relation over $R$.
\end{definition}

A theory is \emph{finite} if the set of facts and rules are finite.
In SDL, a \emph{proof} $P$ of length $n$ is a finite sequence
$P(1),\dots,P(n)$ of \emph{tagged literals} of the type $\pm\Delta\varphi$
and $\pm\partial\varphi$. The idea is that, at every step of the
derivation, a literal is either proven or disproven.

In our logic, we must be able to derive multiple conclusions in a single
derivation step, and hence we require a mechanism to determine when
premises have been used to derive conclusion. Accordingly, we modify
the definition of proof to be a matrix.

\begin{definition}\label{def:Proof}
	A proof $P$ in RSDL is $P(l,c)$ a finite matrix
$P(1,1),\dots,P(l,c)$ of \emph{tagged literals} of the type $\pm\Delta\varphi$, $\pm\partial\varphi$, 
$+\sigma\varphi$, $+\Delta\varphi^\checkmark$ and $+\partial\varphi^\checkmark$.
	
\end{definition}

We assume that facts are
simultaneously true at the beginning of the computation. Notation $+\#\varphi^\checkmark$, $\#\in\set{\Delta,\partial}$, denotes the
fact that $\varphi$ has been consumed. The distinctive notation for when a
literal is proven and when it is consumed will play a key role to determine
which rules are applicable.

The tagged literal $\pm\Delta\varphi$ means that $\varphi$ is \emph{strictly
proved/refuted} in $D$, and, symmetrically, $\pm\partial\varphi$ means that
$\varphi$ is \emph{defeasibly proven/refuted}. The set of positive and negative conclusions is called \emph{extension}.

In SDL, given a set of facts, a set of rules and a superiority
relation, the extension is unique. It is clear that this is not the case when
resource consumption and ordered sequences are to be taken into account:
depending on the order in which the rules are applied, (rather) different extensions
can be obtained. In sub-structural defeasible logic every distinct derivation
corresponds to an extension.

In SDL, derivations are based on the notions of a rule
being \emph{applicable} or \emph{discarded}. Briefly, a rule is applicable
when every literal in the antecedent has been proven at a previous step. We report hereafter a standard defeasible proof tag.

\begin{tabbing}
If \=$P(n+1) = +\partial\varphi$ then\+\\
  (1) $\exists r\in R_{sd}[\varphi]$: $r$ is applicable and\\
  (2) \=$\forall s\in R[\non\varphi]$ either\+\\
      (2.1) $s$ is discarded or\\
      (2.2) \=$\exists t\in R[\varphi]$: $t$ is applicable and $t\succ s$.
\end{tabbing}

A literal is defeasibly proven when there exists an applicable rule for such
a conclusion and all the rules of the opposite are either discarded, or
defeated by stronger rules. (Strict derivations only differ in that, when a
rule is applicable, we do not care about contrary evidence, and the rule will
always produce its conclusion nonetheless. As such, the consistency of the
logics depends only on the strict part of the logics.)

As for SDL, we obtain variants of the logic by providing different
definitions of being \emph{applicable} and \emph{discarded}. More
specifically, for RSDL, definitions of applicability and discardability need
to take into account (i) the number of times a literals appear in the body of
a rule, (ii) how many times they have been derived\footnote{Trivially, e.g.,
if literal $\alpha$ has been derived twice, but it appears in the antecedent
of three rules, only two of such rules can produce their conclusions.}, (iii)
the order in which the literals occur in the body of a rule and in a
derivation. In addition, we have to extend the structure of the proof
conditions to include mechanisms or conditions to determine when
literals/resources have been used to derive new literals/resources.

We shall proceed incrementally. First, we provide definitions for multi-sets.
We then provide definitions for sequences. In both cases, we consider
rules with a single literal for conclusion. Consequently, we end with the definitions
to describe multiple conclusions.

\begin{definition}\label{def:ApplicableMS}
  A rule $r$ is $\#$\emph{-applicable}, $\#\in\set{\Delta,\partial,\sigma}$, at $P(l+1,c+1)$ iff 
	$\forall$ $\alpha_i\in A(r)$ then $+\#\alpha_i\in P[(1,1)..(l,c)]$.
Moreover, we say that $r$ is $\#$\emph{-consumable} iff $r$ is $\#$-applicable and $\exists$ $l'\leq l$ such that $P(l',c)=+\# \alpha_i$.
\end{definition}

A rule is consumable if it is applicable and, for every literal in its
antecedent, there is an \emph{unused} occurrence. (This can be done by
checking the previous derivation step $c$.)
Discardability is obtained by applying the principle of the strong negation to the definition of applicability.

\begin{definition}\label{def:DiscardedMS}
  A rule $r$ is $\#$\emph{-discarded}, $\#\in\set{\Delta,\partial}$, at $P(l+1,c+1)$ iff
	$\exists$ $\alpha_i\in A(r)$ such that $-\#\alpha_i\in P[(1,1)..(l,c)].$
Moreover, we say that $r$ is $\#$\emph{-non--consumable} iff either $r$ is discarded, or $\forall$ $l'\leq l$, $P(l',c)\neq +\partial \alpha_i$.
\end{definition}

	


	

Lastly, we define the conditions describing when a literal is consumed.

\begin{definition}\label{def:ConsumedLiteral}
  Given rule $r$, a literal $\alpha\in A(r)$ is $\#$-consumed, $\#\in\set{\Delta,\partial}$, at $P(l+1,c+1)$, iff
\begin{enumerate}
	\item $\exists$ $l'\leq l$ such that $P(l',c)=+\#\alpha$, and
	
	\item $P(l',c+1)=+\#\alpha^\checkmark$.
\end{enumerate}
\end{definition}

The following example 
illustrates how we use $+\partial \varphi^\checkmark$ within the resource consumption mechanism.

\begin{example}
	Consider $D=(\set{\alpha},R, \emptyset)$, where 
\[
	R=\{r_0: \alpha \To \beta,\quad r_1: \beta \To \varphi,\quad r_2: \beta \to \psi\}
\]	
and (one of) the corresponding proof table(s):
\[
\begin{array}{r|ccc}\label{tab:1}
 P & 1 & 2 & 3\\\hline		
1 & +\Delta\alpha  & +\Delta\alpha^\checkmark  & +\Delta\alpha^\checkmark \\	

2 &         & +\partial\beta              & +\partial\beta^\checkmark\\

3 &         &                    & +\partial\varphi\\
\end{array}
\]
\end{example}

Naturally, two mutually exclusive extensions are possible, based on whether
$+\partial\beta$ is used by $r_1$ to derive $+\partial\varphi$, or by $r_2$ to derive
$+\partial\psi$. Table~\ref{tab:1} shows the former case. At $P(1,1)$ we
obtain $+\Delta\alpha$, instance that is consumed in deriving $+\partial\beta$ at $P(2,2)$. Thus, $P(1,2)=+\Delta\alpha^\checkmark$. Symmetric situation for activating $r_1$ at the third
derivation step, which results in $P(2,3)=+\partial\beta^\checkmark$ and
$P(3,3)=+\partial\varphi$. Now, at the fourth derivation step, $r_2$ is
applicable but non-consumable, and hence we cannot derive $+\partial\psi$.

\subsection*{Proof tags for multi-sets in the antecedent and single conclusion} 
\label{sec:proof_tags_for_multi_sets_in_the_antecedent_and_single_conlusion}


We now present the proof tags, and we begin with: (1) the antecedent is a multi-set, (2) single literal in the conclusion. $+\Delta$ describes positive definitive (strict) derivations.

\begin{tabbing}
 $+\Delta$: \= If $P(l+1,c+1)=+\Delta \varphi$ then\+\\
 (1) \=$\varphi\in F$, or\\
 (2) \= (1) $\exists r\in R_s[\varphi]$ such that\\
 \> (2) $r$ is $\Delta$-consumable and\\
 \> (3) $\forall \alpha_j\in A(r)$, $\alpha_j$ is $\Delta$-consumed.
\end{tabbing}
 
Literal $q$ is definitely provable if it either is a fact, or there is a strict,
applicable rule for $\varphi$, whose antecedent literals can be consumed.
Condition (2) actually consumes the literals by replacing $+\Delta\alpha_j$
with $+\Delta\alpha_j^\checkmark$. Proof tag $-\Delta$ is as follows:

\begin{tabbing}
 $-\Delta$: \= If $P(l+1,c+1)=-\Delta \varphi$ then\+\\
 (1) \=$\varphi\notin \FACTS$ and\\
 (2) \= $\forall r\in R_s[\varphi]$, $r$ is $\Delta$-non-consumable.

\end{tabbing}

Literal $q$ is definitely refuted if $\varphi$ is not a fact, and every
rule for $\varphi$ is non-consumable.
We can now turn our attention to the definition of the proof tags for
defeasible conclusions. In particular, we provide proof conditions
(corresponding to inference rules) for three types of conclusions:
$+\partial\varphi$ meaning that the current derivation $P$ proves $\varphi$;
$-\partial\varphi$ meaning that the current derivation $P$ prevents the proof
of $\varphi$, or in other terms that $\varphi$ is refuted; and
$+\sigma\varphi$ whose intuitive reading is that the $\varphi$ would be
derivable in the current proof if more (appropriate) resources are would be
available. Alternatively, for $+\sigma\varphi$ indicates that there are
applicable rules for $\varphi$, but the resources for such rules have been
used to derive other conclusions.

\begin{tabbing}
 $+\partial$: \= If $P(l+1,c+1)=+\partial \varphi$, then\+\\
 (1) \=$+\Delta \varphi\in P(l,c)$ or\\
 (2) \= (1) $-\Delta\non \varphi \in P(l,c)$ and\+\\
     (2) \= $\exists r\in R_{sd}[\varphi]$ $\partial$-consumable and\\
     (3) \= $\forall s \in R[\non \varphi]$ either\+\\
 	     (1) \= $s$ is $\partial$-discarded, or\\
	     (2) $\exists t\in R[\varphi]$ $\partial$-consumable, $t\succ s$, and\\
 	     (3) if $\exists w\in R[\non\varphi]$ $\partial$-applicable, $t\succ w$, then\+\\
             (1) \=$\forall \alpha_j\in A(t)$, $\alpha_j$ is $\partial$-consumed, otherwise\\
            (2) $\forall \alpha_k\in A(r)$, $\alpha_k$ is $\partial$-consumed.
\end{tabbing}

Condition (1) allows us to inherit a defeasible derivation from a definite
derivation. Condition (2.1) ensures that the logic is sound (i.e., that it is
not possible to derive $\varphi$ and its negation as provable defeasible
conclusions unless they are derivable from the strict part only. Condition
(2.2) requires that there is rules that is triggered by previously proved
literals that have not been used to trigger other conclusions. Clause (2.3.1)
is the standard one of defeasible logic, meaning that to rebut an attacking
argument, we can show that some of the premises of the argument/rule have
been refuted. The second method to rebut the attacking arguments (rules for
$\non\varphi$) is to show that they are defeated, i.e., weaker than
applicable rules for the conclusion we want to prove, similarly the
antecedents of such rules must not have been used for other conclusions,
clause (2.3.2). The final part is the mechanism to determine which resources
are consumed by the derivation of $\varphi$. If there are no applicable rules
for $\non\varphi$ the resources are taken for the rule proposed as an
argument, that is rule $r$ (2.3.3.2); if there are counter-argument, the
resources are taken from each rule rebutting a counter-argument, for each
possible applicable counterargument (2.3.3.1). Note, that in this way we
capture the idea of team defeat (see Section~
\ref{sec:desired_properties}.

For $-\partial$ we use the strategy similar to that used in \cite{ecai2000-5}
to provide proof condition for the ambiguity propagating variant of SDL, that is, we make it easier to attack a rule (2.2.2). Also, for
the derivation of literals tagged with $-\partial$ does not require the
consumption of resources. Resources are consumed only when we positively
derive literals.

\begin{tabbing}
$-\partial$: \= If $P(l+1,c+1)=-\partial\varphi$, then \+\\
  (1) $-\Delta\varphi\in P[l,c]$ and\\
  (2)\=(1) $+\Delta\non\varphi\in P[l,c]$ or\+\\
     (2) \=$\forall r\in R_{sd}[\varphi]$ either\+\\
        (1) $r$ is $\partial$-discarded or\\
        (2) \=$\exists s\in R[\non\varphi]$ such that\+\\
            (1) $s$ is $\sigma$-applicable and\\
            (2) \=$\forall t\in R[\varphi]$ either\+\\
                (1) $t$ is $\partial$-discarded or
                (2) not $t\succ s$.  
\end{tabbing}

The idea behind $+\sigma$ is that there are rules applicable for the
consequent, irrespective whether the premises have been used or not. However,
we have to check that the rule is not defeated by an applicable rule for the
opposite (2.2).

\begin{tabbing}
$+\sigma$: \=If $P(l+1,c+1)=+\sigma\varphi$ then\+\\
 (1) $\Delta\varphi\in P[l,c]$ or\\
 (2) \=$\exists r\in R_{sd}[\varphi]$ such that \+\\
     (1) $r$ is $\sigma$-applicable and\\
     (2) \=$\forall s\in R[\non\varphi]$ either\+\\
         (1) $s$ is $\partial$-discarded or
         (2) not $s\succ r$.
\end{tabbing}

\begin{example}\label{ex:MultisetSingleConclusion}
	Consider $D=(\set{\alpha,\beta,\gamma},R,$ $\succ=\set{(r_2,r_3)})$, where 
\[	
R=\{r_{0}: \mathit{\alpha} \To \mathit{\varphi},\quad r_{1}: \mathit{\alpha} \To \mathit{\psi},\quad r_{2}: \mathit{\beta} \To \mathit{\varphi},\quad r_{3}: \mathit{\gamma} \To \non\mathit{\varphi}\},
\]
and the corresponding proof table:
\[
\begin{array}{r|ccccc}\label{tab:2}
 P & 1 & 2 & 3 & 4 & 5\\\hline		
1 & +\Delta\alpha & +\Delta\alpha & +\Delta\alpha  & +\Delta\alpha  & +\Delta\alpha^\checkmark \\	

2 &  & +\Delta\beta & +\Delta\beta & +\Delta\beta^\checkmark & +\Delta\beta^\checkmark\\

3 &  &  &+\Delta\gamma & +\Delta\gamma & +\Delta\gamma\\

4 &  &  &           & +\partial\varphi        & +\partial\varphi\\

5 &  &  &  &                                 & +\partial\psi
\end{array}
\]
\end{example}

Assume that, at the fourth derivation step, $r_0$ is taken into
consideration; $r_0$ is actually consumable, but so is $r_3$, and no
superiority is given between $r_0$ and $r_3$. There actually exists a
consumable rule stronger than $r_3$, $r_2$. Accordingly, the team
defeater allows us to prove $+\partial\varphi$, and only $\beta$ is consumed
in this process. This implies that $\alpha$ is still available,
and can be used at the fifth derivation step to produce $+\partial\psi$, via
$r_1$.

\subsection*{Proof tags for sequences in the antecedent and single conclusion} 
\label{sec:proof_tags_for_sequences_in_the_antecedent_and_single_conlusion}


We can now move forward and consider sequences in the antecedent. Naturally,
definitions of being applicable, discarded, and consumable must be revised.

A rule is \emph{sequence applicable} when the derivation order reflects the
order in which the literals appear in the antecedent, i.e., for every two
literals in the antecedent, say $\alpha$ and $\beta$, such that $\alpha$
appears before $\beta$, there exists a derivation of $\alpha$ before every
derivation (for that occurrence\footnote{In order to handle situations like $A(r)=\set{\alpha;\beta;\alpha;\beta}$
(as illustrated in Example~\ref{ex:ABA}).}) of $\beta$.

\begin{definition}\label{def:SeqAppl}
	A rule $r\in R[\varphi]$ is $\#$-\emph{sequence applicable}, $\#\in\set{\Delta,\partial}$, at $P(l+1,c+1)$ iff for all $\alpha_i\in A(r)$:
	
\begin{enumerate}

	\item\label{cond:Appl1} there exists $c_i \leq c$ such that $P(l_i,c_i) = +\# \alpha_i$, $l_i \leq l$, and 
	
	\item for all $\alpha_j\in A(r)$ such that $i < j$, then
	
	\item\label{cond:Appl3} for all $c_j \leq c$ such that $P(l_j,c_j) = +\# \alpha_j$, $l_j \leq l$ then $l_i < l_j$ and $c_i < c_j$.

\end{enumerate}

We say that $r$ is $\#$-\emph{sequence consumable} iff is $\#$-\emph{applicable} and 4. $P(l_i,c)=+\# \alpha_i$.
\end{definition}

A rule is sequence discarded when there exists a literal in the antecedent,
 which has been previously disproven, or there are two proven literals in
the antecedent, say $\alpha$ and $\beta$, such that $\alpha$ appears before
$\beta$, and one proof for $\beta$ is before every proof for $\alpha$.

\begin{definition}\label{def:SeqDisc}
	A rule $r\in R[\varphi]$ is $\#$-\emph{sequence discarded}, $\#\in\set{\Delta,\partial}$, at $P(l+1,c+1)$ iff for there exists $\alpha_i\in A(r)$ such that either
	
\begin{enumerate}

	\item $-\#\alpha_i \in P(l-1,c-1)$, or 
	
	\item for all $c_i \leq c$ such that $P(l_j,c_i)=+\# \alpha_i$, $l_i \leq l$, then
	
	\begin{itemize}
		\item there exists $\alpha_j\in A(r)$, with $i < j$, such that
		
		\item there exists $c_j \leq c$ such that $P(l_j, c_j)=+\# \alpha_j$, $l_j \leq l$, and $c_i > c_j$, $l_i > l_j$.
	\end{itemize}

\end{enumerate}
\end{definition}

The definition of a literal being $\#$-consumed remains the same as before. The proof tags for strict and defeasible conclusions with (i) sequences in the antecedent, and (ii) a single conclusion can be obtained by simply replacing 

\begin{itemize}
	\item $\#$-applicable with $\#$-sequence applicable;
	\item $\#$-consumable with $\#$-sequence consumable;
	\item $\#$-discarded with $\#$-sequence discarded.	
\end{itemize}

\begin{example}\label{ex:ABA}
	Consider $D=(\set{\gamma,\epsilon,
\zeta},R, \succ=\emptyset)$, where $R=\{r_{0}: \mathit{\alpha},\mathit{\beta},\mathit{\alpha} \To \mathit{\varphi},\quad r_{1}: \mathit{\gamma} \To \mathit{\alpha},\quad
	 r_{2}: \mathit{\epsilon} \To \mathit{\alpha},\quad r_{3}: \mathit{\epsilon} \To \mathit{\beta}\}$.

\end{example}

Assume the rules are activated in this order: first $r_1$, then $r_2$, last
$r_3$. Thus, $P(4,4)=+\partial\alpha$, $P(5,5)=+\partial\alpha$, and
$P(6,6)=+\partial\beta$. The derivation order between $\beta$ and the second
occurrence of $\alpha$ has not been complied with, and $r_0$ is sequence
discarded. Same if the order is `$r_3$, $r_1$, $r_2$', whilst `$r_2$, $r_3$,
$r_1$' is a legit order to let $r_0$ be sequence applicable.

\subsection*{Proof tags for sequences in both the antecedent and conclusion} 
\label{sec:proof_tags_for_sequences_in_both_the_antecedent_and_conclusion}

Even when we consider sequences in the consequent, a literal's strict
provability or refutability depends only upon whether the strict rule (where the
literal occurs) is sequence consumable or not. 
As such, given a strict rule $r\in R_s[\varphi;j]$, still $\varphi$'s strict
provability/refutability depends only upon whether $r$ is strictly sequence consumable or not. However, now we also have to verify that, if $r\in R_s[\psi;j-1]$, we prove $\varphi$ immediately after $\psi$. 
The resulting new formalisation of $+\Delta$ is as follows:

\begin{tabbing}
 $+\Delta$: \= If $P(l+1,c+1)=+\Delta \varphi$ then\+\\
 (1) \=$\varphi\in F$, or\\
 (2) \= (1) $\exists r\in R_s[\varphi;j]$ such that\\
 \> (2) $r$ is $\Delta$-sequence-consumable,\\
 \> (3) $r\in R_s[\psi;j-1]$ and $P(l+i-1,c+1)=+\Delta\psi$,\\
 \> (4) $\forall \alpha_j\in A(r)$, $\alpha_j$ is $\Delta$-consumed.
\end{tabbing}

The positive, defeasible proof tag is as follows.

\begin{tabbing}
 $+\partial$: \= If $P(l+i,c+1)=+\partial \varphi$, then\+\\
 (1) \=$+\Delta \varphi\in P(l,c)$ or\\
 (2) \= (1) $-\Delta\non \varphi \in P(l,c)$ and\+\\
     (2) \= (1) $\exists r\in R_{sd}[\varphi;j]$ $\partial$-sequence-consumable and\\
	 	\> (2) $\exists r\in R[\psi;j-1]$ and\\
		\> (3) $P(l+i-1,c+1)=+\partial\psi$, and\\
     (3) \= $\forall s \in R[\non \varphi]$ either\+\\
 	     (1) \= $s$ is $\partial$-sequence-discarded, or\\
	     (2) $\exists t\in R[\varphi]$ $\partial$-sequence-consumable, $t\succ s$, and\\
 	     (3) if $\exists w\in R[\non\varphi]$ $\partial$-sequence-applicable, $t\succ w$, then\+\\
             (1) \=$\forall \alpha_j\in A(t)$, $\alpha_j$ is $\partial$-consumed, otherwise\\
            (2) $\forall \alpha_k\in A(r)$, $\alpha_k$ is $\partial$-consumed.
\end{tabbing} 

Negative proof tags are trivial to deduce, and therefore omitted.

\begin{example}\label{ex:SemilastIhope}
	Consider $D=(\set{\alpha,\beta},R, \succ=\emptyset)$, where 
\allowdisplaybreaks
\begin{align*}
    R=\{ r_{0}: \mathit{\alpha} \To \mathit{\varphi};\mathit{\chi};\mathit{\psi}, \qquad r_{1}: \mathit{\beta} \To \mathit{\non\chi}\}.
\enlargethispage{.5\baselineskip}
\end{align*}
At $P(3,3)$ we prove $+\partial\varphi$, while $P(4,3)=-\partial\chi$ ($r_1$
is sequence-applicable and $r_0$ is not stronger than $r_1$). Therefore, $r_0$
cannot prove $\psi$.
\end{example}


 \subsection*{Proof tags for sequences in the antecedent and multi-sets in the conclusion} 
 \label{sec:proof_tags_for_sequences_in_the_antecedent_and_multi_sets_in_the_conclusion}
 
We lastly take into account multi-sets in the conclusion. When considering a
`team defeater fight', two scenarios are possible. In the former, we
draw a conclusion only if there is a winning team defeater for each literal
in the conclusion. In the latter, we limit the comparison on the individual
literal (and thus the latter solution is less strict than the former).
 
Strict provability does not change with respect to the one described in the
previous section, and is therefore omitted.

\begin{tabbing}
 $+\partial$: \= If $P(l+i,c+1)=+\partial \varphi$, then\+\\
 (1) \=$+\Delta \varphi\in P(l,c)$ or\\
 (2) \= (1) $-\Delta\non \varphi \in P(l,c)$ and\+\\
     (2) $\exists r\in R_{sd}[\varphi,j]$ $\partial$-sequence-consumable and\\
     (3) \= $\forall s \in R[\non \psi]$ such that $\psi\in C(r)$ either\+\\
 	     (1) \= $s$ is $\partial$-sequence-discarded, or\\
	     (2) $\exists t\in R[\psi]$ $\partial$-sequence-consumable, $t\succ s$, and\\
 	     (3) if $\exists w\in R[\non\varphi]$ $\partial$-sequence-applicable, $t\succ w$, then\+\\
             (1) \=$\forall \alpha_j\in A(t)$, $\alpha_j$ is $\partial$-consumed, otherwise\\
            (2) $\forall \alpha_k\in A(r)$, $\alpha_k$ is $\partial$-consumed.
\end{tabbing}

Consider $D$ of Example~\ref{ex:SemilastIhope}, where this time $C(r)=\set{\varphi,\chi,\psi}$. There is no rule stronger than $r_1$, and thus no conclusion can be defeasibly proven.

\begin{tabbing}
 $+\partial$: \= If $P(l+i,c+1)=+\partial \varphi$, then\+\\
 (1) \=$+\Delta \varphi\in P(l,c)$ or\\
 (2) \= (1) $-\Delta\non \varphi \in P(l,c)$ and\+\\
     (2) $\exists r\in R_{sd}[\varphi,j]$ $\partial$-sequence-consumable and\\
     (3) \= $\forall s \in R[\non \varphi]$ either\+\\
 	     (1) \= $s$ is $\partial$-sequence-discarded, or\\
	     (2) $\exists t\in R[\varphi]$ $\partial$-sequence-consumable, $t\succ s$, and\\
 	     (3) if $\exists w\in R[\non\varphi]$ $\partial$-sequence-applicable, $t\succ w$, then\+\\
             (1) \=$\forall \alpha_j\in A(t)$, $\alpha_j$ is $\partial$-consumed, otherwise\\
            (2) $\forall \alpha_k\in A(r)$, $\alpha_k$ is $\partial$-consumed.
\end{tabbing} 

Consider $D$ of Example~\ref{ex:SemilastIhope}, again with $C(r)=\set{\varphi,\chi,\psi}$. This time $r_1$ can prevent only to prove $+\partial\chi$ (and, analogously, $r_0$ prevents to conclude $+\partial\non\chi$), and thus we prove $+\partial\varphi$ as well as $+\partial\psi$.


\section{Conclusions, related and further work}\label{sec:Conclusions}

We have dealt with the problem of manipulating resource
consumption in non-monotonic reasoning. 
The combination of linear and defeasible features in a logical system is a
complete novelty in the community of computational logic and
knowledge representation.

Variants of SDL have been investigated so far as a
means for devising business process traces
\cite{DBLP:conf/prima/OlivieriGSC13,DBLP:conf/prima/OlivieriCG15,DBLP:conf/edoc/GhooshchiBGOS17}. While the idea is closely related to outline in the
Introduction that a derivation corresponds to a trace in a process, the
approach based on variants of SDL are not able to handle loops and, in
general, repetitions of tasks. These aspects are elegantly captured by the
sub-structural aspects presented in the paper.

Studies on light linear logic versions, with specific aspects
of linearity related to resource consumption have been devised such as
\emph{light} and \emph{soft linear logic} \cite{Gaboardi200867,Girard1998175}.
Applications of linear logic to problems indirectly related to business
processes such as Petri Nets can be found in \cite{Kanovich199748} and in
\cite{Tanabe1997156,DBLP:journals/apal/EngbergW97}. However, such approaches
are not able to handle in a natural fashion the aspect of exceptions. The
representation of exception would require complex rules and
encyclopaedic knowledge of the scenarios described by the processes encoded
by rules/sequents.

The most important properties we aim at proving for a logical non-monotonic
system are \emph{consistency} and \emph{coherence}\footnote{ In DL, a theory $D$ is \emph{consistent} if, for no literal $\varphi$, $D\vdash+\#\varphi$ and $D\vdash+\# \sim \varphi$; $D$ is \emph{coherent} if, for no literal $\varphi$, $D\vdash+\# \varphi$ and $D\vdash-\# \varphi$, $\#\in\set{\Delta,\partial}$.}. Although we cannot report a formal proof of consistency and coherence for the
logical system so far, we have exhaustively determined the conditions for
interference of the sub-structural and the non-monotonic aspects of RSDL, and
the formalisation of these derives the aforementioned properties.

A logical system enjoys the \emph{finite model property} when for every set
of formulae, the associated meaning to each formula requires a finite set of
elements in the semantics for every model of that set. In the case of RSDL,
the semantics is determined by the derivations that are possible given a
theory.

As a consequence of the aforementioned notions we shall prove one property
that regards acyclic RSDL. The Atom Dependency Graph\footnote{In the Atom Dependency Graph the atomic propositions are the nodes, and there is a directed edge between nodes if there is a rule containing the source or its negation in the body, and the target or its negation in the head.} of a defeasible
theory has been defined in many different contexts, specifically in the
analysis of preferences, as in \cite{Governatori2010104}. Acyclic RSDLs are
theories in which no cycle appears in the Atom Dependency Graph. This means
that when a rule is used to produce a conclusion, the resources in 
the antecedent of the rules cannot be replenished, and we reach nodes, literals, 
that can be produced by the theory only if they are given (node, with no incoming
edges in the Atom Dependency Graph). 

\begin{theorem}
Acyclic finite RSDL theories enjoy the Finite Model Property.
\end{theorem}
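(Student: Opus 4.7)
The plan is to argue that, under acyclicity, every derivation matrix $P$ built according to the proof conditions of Section~\ref{sec:LDL} has bounded size, and that the number of such matrices is itself bounded; together these yield the finite model property, since an RSDL model is determined by a derivation.

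First I would fix a topological ordering of the Atom Dependency Graph, which exists because the graph is finite and acyclic. Write $\LIT = L_0 \uplus L_1 \uplus \dots \uplus L_k$, where $L_0$ is the set of literals with no incoming edge (sources), and $L_{i+1}$ collects literals all of whose predecessors lie in $L_0 \cup \dots \cup L_i$. I would then prove by induction on $i$ the claim: for every literal $\varphi \in L_i$, the total number of occurrences of tagged literals $+\#\varphi$ (and hence $+\#\varphi^\checkmark$) that can ever appear in any RSDL proof over the theory is bounded by a finite constant $N_i(\varphi)$ depending only on $F$, $R$ and the ordering. The base case is immediate: a literal $\varphi\in L_0$ has no rule with $\varphi$ (or $\non\varphi$) in the head, so its occurrences are either zero or exactly the multiplicity with which $\varphi$ appears in $F$. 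For the inductive step, the only way to produce $+\#\varphi$ is via a rule $r\in R[\varphi,j]$ (or $R[\varphi;j]$), whose antecedent literals lie in $L_0\cup\dots\cup L_i$; by the resource-consumption clauses in the definitions of $\#$-(sequence-)consumable and $\#$-consumed, each firing of $r$ strictly consumes at least one occurrence of every $\alpha\in A(r)$. The inductive hypothesis bounds the total available occurrences of those antecedents, so $r$ can fire at most finitely many times, and summing over the finitely many rules in $R[\varphi,\cdot]$ gives the desired $N_{i+1}(\varphi)$.

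Next I would translate this bound on tagged literals into a bound on the shape of $P$. Each column of $P$ records at most one new positive conclusion (either a fresh $+\#\varphi$ or a consumption tick $+\#\varphi^\checkmark$), and each row corresponds to a literal ever proved. By the previous claim, the total number of positive entries across all columns is at most $\sum_i \sum_{\varphi\in L_i} 2\,N_i(\varphi)$, which is finite. Negative tags $-\Delta\varphi,-\partial\varphi$ add at most one row per literal of $\LIT$, and $\LIT$ is finite because $F$ and $R$ are. Hence the matrix $P$ has a uniformly bounded number of rows and columns, so there are only finitely many legal proof matrices over the theory. Each extension (model) is determined by such a matrix, giving the finite model property.

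The main obstacle I anticipate is the inductive step in the case of multiple conclusions and team defeat: when a rule $r\in R[\varphi,j]$ fires, the clauses (2.3.3.1)--(2.3.3.2) of $+\partial$ may consume antecedents of \emph{other} rules $t$ used to beat counter-arguments, and with sequence-ordered multi-conclusion rules a single firing produces several literals from possibly different layers $L_i$. To handle this cleanly I would strengthen the induction hypothesis to bound the total number of \emph{rule firings} (not just literal occurrences) in terms of the cumulative resource budget, and observe that every firing strictly decreases a well-founded measure, namely the multiset of remaining resources ordered by the topological layer in which they sit. Acyclicity is exactly what guarantees that no firing can ever replenish a resource in a layer below the conclusion, so the measure cannot recover, and the induction goes through.
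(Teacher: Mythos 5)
Your proof follows essentially the same route as the paper's: acyclicity of the Atom Dependency Graph guarantees that rule firings consume resources that can never be replenished, so every derivation is finite and the proof matrix $P$ is bounded. The paper's own argument is a two-sentence sketch resting on the claim that ``every time a rule is used to derive a positive conclusion, the number of available resources decreases,'' which taken literally fails for a rule such as $\alpha\To\beta,\gamma$ whose firing replaces one resource with two; your layered induction and the well-founded multiset measure over topological layers is precisely the refinement that repairs this, so your sketch is, if anything, more careful than the published proof.
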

\begin{proof}
If there are no cycles in the Atom Dependency Graph, every time a rule is used
to derive a positive conclusion (strict or defeasible), the number of available resources
decreases. Accordingly, the maximum number of literals that can appear 
in a proof $P$ is bounded and proportional to the number of literals occurring 
in the head of rules in a given theory. Consequently, every derivation is finite.
Hence, the theory has the Finite Model Property. 
\end{proof}

Note that the theory with $\alpha$ as a fact and the rule $\alpha\To\alpha$
can generate a derivation with infinitely many occurrences of $\alpha$. 
%
%
The acyclicity condition allows us to compute in finite time the extension 
of a theory. However, this is not the case for cyclic theory, where the 
computation is not guaranteed to terminate. Accordingly, we can state the
following result. 

\begin{theorem}
The problem of computing extensions of cyclic RSDL theories is semi-decidable.
\end{theorem}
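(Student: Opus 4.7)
The plan is to exhibit a semi-algorithm that enumerates all valid derivations and, in particular, witnesses any literal that belongs to some extension. The key observation is that, by Definition~\ref{def:Proof}, a proof is by definition a \emph{finite} matrix $P(l,c)$ of tagged literals: even when the Atom Dependency Graph has cycles, each individual derivation remains a finite object. Cyclicity only destroys termination of the naive ``compute-everything'' procedure, not the finiteness of any single derivation. Hence the set of derivations is countable and admits a natural enumeration.

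Concretely, I would proceed in three steps. First, fix a Gödel-style encoding of finite proof matrices over the (finite) signature of atoms and rules of the given theory. Since each cell contains a tagged literal drawn from a finite alphabet extended with the $\checkmark$ marker, the set of candidate matrices of any fixed dimension is finite, and the union over all dimensions is effectively enumerable. Second, show that the predicate ``$P$ is a valid derivation'' is decidable: to verify $P$, walk through the cells $(l,c)$ in order and, at each positive or negative tagged entry, check the corresponding proof condition from Section~\ref{sec:LDL}. Every such condition is a finite boolean combination of quantifications over $R$ (finite by hypothesis) and over previously filled cells of $P$ (also finite), together with the applicability/consumability/discardability tests of Definitions~\ref{def:ApplicableMS}, \ref{def:DiscardedMS}, \ref{def:SeqAppl}, and \ref{def:SeqDisc}, each of which is a finite check. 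Third, dovetail: enumerate candidate matrices and, for each, decide validity; whenever a valid derivation is found, output its induced extension (equivalently, emit the tagged literals appearing in its final column).

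This procedure is a semi-decision procedure for the relation ``literal $\varphi$ is tagged $+\#\varphi$ in some extension of the theory'': if such an extension exists, some valid derivation witnessing it has a finite encoding and will eventually be generated and verified; if no such extension exists, the procedure may run forever, which is precisely what semi-decidability permits. Combining this with the decidability of the verification step yields that the set of (finite) extensions of a cyclic finite RSDL theory is recursively enumerable.

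The main obstacle, and the only subtle point, is the verification step for the consumption-tracking tags $+\#\alpha^\checkmark$ and for the interaction with the team-defeater clause in the $+\partial$ condition: one must be careful that the ``witness resources have been consumed'' obligations are implemented as a local check on the finite prefix $P[(1,1)\ldots(l,c)]$, rather than as a global property requiring look-ahead. Since every proof condition in Section~\ref{sec:LDL} is stated over previously filled cells and over the finite rule set $R$, this locality is built in, and the verification remains decidable. Everything else is standard dovetailing.
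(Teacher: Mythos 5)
Your argument is sound, and it is in fact more of a proof than the paper provides: the paper states this theorem with no proof environment at all, relying only on the preceding informal remark that for cyclic theories ``the computation is not guaranteed to terminate.'' The paper's implicit argument is thus ``run the natural forward computation; if it halts you have the extension, otherwise you learn nothing,'' whereas you give the complementary and more rigorous generate-and-test formulation: effectively enumerate all finite candidate proof matrices, decide validity of each against the proof conditions (a finite check over the finite rule set $R$ and the finite prefix of the matrix), and dovetail. Both are standard routes to semi-decidability; yours has the advantage of not depending on any particular deterministic computation strategy and of making explicit exactly which relation is being shown recursively enumerable (``$\varphi$ is tagged $+\#\varphi$ in some valid derivation''). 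Two caveats worth flagging, neither fatal. First, your verification step implicitly reads the proof conditions, which the paper states only as necessary conditions (``If $P(l+1,c+1)=+\partial\varphi$ then \dots''), as if they were full inductive definitions; this is the standard convention in Defeasible Logic, but it is what makes ``valid derivation'' a decidable predicate, so it deserves a sentence. Second, the paper's own remark that the theory with fact $\alpha$ and rule $\alpha\To\alpha$ ``can generate a derivation with infinitely many occurrences of $\alpha$'' sits uneasily with Definition~\ref{def:Proof}'s finite matrices; your argument establishes r.e.-ness of the literals witnessed by \emph{finite} derivations, which is the only reading under which the theorem's statement is coherent, but you should say explicitly that this is the reading you adopt, since a notion of extension defined as a limit of an infinite derivation would not be captured by enumerating finite matrices.
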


For acyclic theories, we have the finite model property. Therefore, since
acyclic theories can be checked for model existence in finite time, when the
model does not exists, and by brute force methods (for instance by simply
computing all the possible sequences of any finite length) we can trivially
claim the following result.

\begin{theorem}
The problem of computing extensions of acyclic RSDL theories is decidable.
\end{theorem}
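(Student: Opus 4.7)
The plan is to leverage the Finite Model Property (established in the previous theorem) to bound the search space for possible derivations and then argue that every candidate derivation can be checked in finite time against the proof conditions of Section~\ref{sec:LDL}.

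First, I would fix an acyclic finite RSDL theory $D=(F,R,\succ)$ and appeal to the Finite Model Property: since there are no cycles in the Atom Dependency Graph, every rule application strictly decreases the multiset of available (not-yet-consumed) resources derivable from $F$ along any path. This yields an explicit bound $N$ on the number of rows of any proof matrix $P(l,c)$, with $N$ proportional to $|F|$ plus the total number of literal occurrences in the heads of rules in $R$. Since each derivation step appends at most finitely many tagged literals (bounded by the size of the heads of rules in $R$), the number of columns is also bounded, and hence the set of syntactically possible proof matrices over the literals and rule labels of $D$ is finite.

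Second, I would describe the decision procedure. Given the bounds above, we can enumerate all candidate matrices $P$ of tagged literals whose dimensions fall within the bounds dictated by the Finite Model Property. For each candidate $P$, we verify cell-by-cell that every tagged literal $P(l+1,c+1)$ satisfies one of the proof conditions for $\pm\Delta$, $\pm\partial$, or $+\sigma$ given in Section~\ref{sec:LDL}. Each such check is a finite computation: it requires looking up rules in $R$ (a finite set), testing sequence-applicability, sequence-consumability and sequence-discardability (Definitions~\ref{def:SeqAppl} and~\ref{def:SeqDisc}, each quantifying only over the finitely many literals in $A(r)$ and the finitely many previous entries of $P$), and consulting the finite superiority relation $\succ$. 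Collecting the positive and negative tags of all valid $P$ yields the set of extensions of $D$.

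Third, the conclusion follows: the procedure halts because the search space is finite and each verification is a terminating computation; hence computing the extensions of $D$ is decidable. The main subtlety — and the one I would spend most care on — is justifying the claim that the Finite Model Property actually caps the matrix dimensions \emph{a priori} (i.e.\ before the proof is constructed), rather than only guaranteeing that whatever proof we build is eventually finite. This is handled by the acyclicity assumption: since no literal can be ``regenerated'' once its producing resources are exhausted, the maximum proof length is a static function of $|F|$ and $\sum_{r\in R}|C(r)|$, so enumeration by brute force on matrices of dimension at most this static bound is well-defined and exhaustive.
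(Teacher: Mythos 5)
Your proposal is correct and follows essentially the same route as the paper: invoke the Finite Model Property of acyclic theories to bound the size of any derivation, then decide by brute-force enumeration of all candidate proofs within that bound, checking each against the (finitely verifiable) proof conditions. The paper states this only as a one-sentence remark before the theorem, so your more explicit treatment of the \emph{a priori} bound on the proof-matrix dimensions is a welcome elaboration rather than a deviation.
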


Generally speaking, the extension computation problem is likely to be
decidable for larger classes of pure acyclic theories. In particular, when
cycles are conservative (cycles that do not produce new instances of the same
literal), or when a cycles are limited in the extension by some rules that
consume the literals generated in the cycles themselves before such literals
could be used to allow other rules fire, it is possible to guarantee the
finite model property. This is a matter of future investigations.

DL has been introduced as a means for managing non-monotonic
aspects of logical conclusion/derivation mechanisms. DL is
efficient in terms of time and space, being the problem of computing the
extension of a defeasible theory linear in the number of literals in the
theory. This property, however, cannot be claimed for RSDL. In particular, we
can show that RSDL can be used to represent classical 3-SAT problem, and
therefore prove that the complexity of this problem cannot be polynomial on
deterministic machines.

The basic idea of reducing 3-SAT is as follows. A 3-SAT problem $P$ is a
clause representing a finite conjunction of triplets ($t_i$), each formed by
three literals ($t_i^1,t_i^2,t_i^3$), that we assume to be conjuncted in the
sub-clause, where we ask whether the clause is satisfiable, or not. 

We map each literal appearing in the clause in a positive literal
$\widehat{t_i^x}$ (with $x =1,2,3$), not appearing in the clause, and add one
positive literal $\widehat{t_i}$ for every triplet, again not appearing in
the triplets. Subsequently, we add one rule $\widehat{t_i} \To \widehat{t_i^x}$ for
each of the three values $x=1,2,3$ and three rules $\widehat{t_i^x} \To
t_i^x$ for each of the values $x=1,2,3$. Finally, we add one fact for every
literal $\widehat{t_i}$. Conclusively, we have mapped every triplet in six
RSDL rules. The resulting RSDL theory has a derivation containing at
least one literal for each clause if and only if the original problem
$P$ is a satisfiable clause. For example, consider the clause 
$(\alpha \vee \beta \vee \gamma) \wedge (\neg \alpha \vee \neg \beta \vee \delta)$.
Using $c_1$ and $c_2$ for the triplets, and $c_1^1,c_1^2,c_1^3,c_2^1,c_2^2,c_2^3$ for the elements in the triplets, the theory encoding the clause is:

\vspace{-12pt}
\hspace{-2em}
\begin{minipage}[t]{.2\columnwidth}
\footnotesize
\begin{gather*}
c_1\\
c_2 \\
c_1 \To c_1^1 
\end{gather*}
\end{minipage}
\begin{minipage}[t]{.2\columnwidth}
\footnotesize
\begin{gather*}
c_1 \To c_1^2 \\
c_1 \To c_1^3 \\
c_2 \To c_2^1
\end{gather*}
\end{minipage}
\begin{minipage}[t]{.2\columnwidth}
\footnotesize
\begin{gather*}
c_2 \To c_2^2 \\
c_2 \To c_2^3 \\
c_1^1 \To \alpha 
\end{gather*}
\end{minipage}
\begin{minipage}[t]{.2\columnwidth}
\footnotesize
\begin{gather*}
c_1^2 \To \beta \\
c_1^3 \To \gamma \\
c_2^1 \To \non \alpha
\end{gather*}
\end{minipage}
\begin{minipage}[t]{.2\columnwidth}
\footnotesize
\begin{gather*}
c_2^2 \To \non \beta \\
c_2^3 \To \delta \\
\end{gather*}
\end{minipage}
\vspace{3pt}

In this paper we presented the general idea of how to develop a logic combining 
features for sub-structural logic and defeasible reasoning, and we provided
some general results about meta-theoretic properties (e.g., decidability and 
computational complexity). More research is required to determine the
correct boundary between decidable and undecidable problems for these types of
hybrid combinations and to provide a full map of the computational complexity 
analysis of the various options. However, the outline we discussed in this section 
seems to indicate that this is not a straightforward task. In this paper,
we did not address the issue of how to model the motivational attitudes of the 
agents, we left the investigation of how to extend the framework to integrate 
with the framework of \cite{jaamas:bio,tplp:goal}. Related to this, we shall
look at the problem of Business Process Compliance, in order to determine
how to employ RSDL for marking up traces of processes corresponding to the execution 
of the a theory.


\bibliographystyle{splncs03}
\bibliography{thisbiblio}


\end{document}